\newtheorem{thm}{Theorem}
\newtheorem{remark}{Remark}
\title{Deep Neural Nets with Interpolating Function as Output Activation}
\author{
  Bao Wang \\
  Department of Mathematics\\
  University of California, Los Angeles\\
   \texttt{wangbaonj@gmail.com} \\
  \and
  Xiyang Luo \\
  Department of Mathematics\\
  University of California, Los Angeles\\
   \texttt{xylmath@gmail.com} \\
  \and
  Zhen Li \\
  Department of Mathematics\\
  Hong Kong University of Science and Technology\\
   \texttt{zli12@mails.tsinghua.edu.cn} \\
\and
  Wei Zhu \\
  Department of Mathematics\\
  Duke University\\
  \texttt{zhu@math.duke.edu} \\
\and
  Zuoqiang Shi \\
  Yau Mathematical Science center\\
  Tsinghua University\\
  \texttt{zqshi@math.tsinghua.edu.cn}\\
  \and
  Stanley Osher \\
  Department of Mathematics\\
  University of California, Los Angeles\\
  \texttt{sjo@math.ucla.edu}\\
}
\begin{document}

\maketitle

\begin{abstract}
We replace the output layer of deep neural nets, typically the softmax function, by a novel interpolating function. And we propose end-to-end training and testing algorithms for this new architecture. Compared to classical neural nets with softmax function as output activation, the surrogate with interpolating function as output activation combines advantages of both deep and manifold learning. The new framework demonstrates the following major advantages: First, it is better applicable to the case with insufficient training data. Second, it significantly improves the generalization accuracy on a wide variety of networks. The algorithm is implemented in PyTorch, and code will  be made publicly available.
\end{abstract}

\section{Introduction}
Generalizability is crucial to deep learning, and many efforts have been made to improve the training and generalization accuracy of deep neural nets (DNNs) \cite{GreedyTraining:2007,DBN:2006}. Advances in network architectures such as VGG networks \cite{VGG:2014}, deep residual networks (ResNets)\cite{DRN:2016,IdentityMap:2016} and more recently DenseNets \cite{Huang:2017CVPR} and many others \cite{Chen:2017NIPS}, together with powerful hardware make the training of very deep networks with good generalization capabilities possible. Effective regularization techniques such as dropout and maxout \cite{hinton2012improving, wan2013regularization, goodfellow2013maxout}, as well as data augmentation methods \cite{krizhevsky2012imagenet,VGG:2014, Zhu:2017DeepLearning} have also explicitly improved generalization for DNNs. 

A key component of neural nets is the activation function. Improvements in designing of activation functions such as the rectified linear unit (ReLU) \cite{glorot2011deep}, have led to huge improvements in performance in computer vision tasks \cite{nair2010rectified, krizhevsky2012imagenet}. More recently, activation functions adaptively trained to the data such as the adaptive piecewise linear unit (APLU) \cite{agostinelli2014learning} and parametric rectified linear unit (PReLU) \cite{he2015delving} have lead to further improvements in performance of DNNs. For output activation, support vector machine (SVM) has also been successfuly applied in place of softmax\cite{Tang:2013}. Though training DNNs with softmax or SVM as output activation is effective in many tasks, it is possible that alternative activations that consider manifold structure of data by interpolating the output based on both training and testing data can boost performance of the network. In particular, ResNets can be reformulated as solving control problems of a class of transport equations in the continuum limit \cite{ResNet:PDE, chang2017multi}. Transport theory suggests that by using an interpolating function that interpolates terminal values from initial values can dramatically simplify the control problem compared to an ad-hoc choice. This further suggests that a fixed and data-agnostic activation for the output layer may be suboptimal.

To this end, based on the ideas from manifold learning, we propose a novel output layer named weighted nonlocal Laplacian (WNLL) layer for DNNs. The resulted DNNs achieve better generalization and are more robust for problems with a small number of training examples. On CIFAR10/CIFAR100, we achieve on average a 30\%/20\% reduction in terms of test error on a wide variety of networks. These include VGGs, ResNets, and pre-activated ResNets. The performance boost is even more pronounced when the model is trained on a random subset of CIFAR with a low number of training examples. We also present an efficient algorithm to train the WNLL layer via an auxiliary network. Theoretical motivation for the WNLL layer is also given from the viewpoint of both game theory and terminal value problems for transport equations.

This paper is structured as follows: In Section \ref{sec:Network-Architecture}, we introduce the motivation and practice of using the WNLL interpolating function in DNNs. In Section \ref{sec:Train-Test}, we explain in detail the algorithms for training and testing DNNs with WNLL as output layer. Section \ref{sec:Theory} provides insight of using an interpolating function as output layer from the angle of terminal value problems of transport equations and game theory. Section \ref{Experiments} demonstrates the effectiveness of our method on a variety of numerical examples.

\section{Network Architecture}
\label{sec:Network-Architecture}
In coarse grained representation, training and testing DNNs with softmax layer as output are illustrated in Fig. \ref{fig:DNN-Structure} (a) and (b), respectively. In $k$th iteration of training, given a mini-batch training data $(\mathbf{X}, \mathbf{Y})$, we perform:

{\it Forward propagation:} Transform $\mathbf{X}$ into deep features by DNN block (ensemble of conv layers, nonlinearities and others), and then activated by softmax function to obtain the predicted labels $\tilde{\mathbf{Y}}$:
$$
\tilde{\mathbf{Y}} = {\rm Softmax}({\rm DNN}(\mathbf{X}, \Theta^{k-1}), \mathbf{W}^{k-1}).
$$
Then compute loss (e.g., cross entropy) between $\mathbf{Y}$ and $\tilde{\mathbf{Y}}$:
$\mathcal{L} = {\rm Loss}(\mathbf{Y}, \tilde{\mathbf{Y}})$.

{\it Backpropagation:} Update weights ($\Theta^{k-1}$, $\mathbf{W}^{k-1}$) by gradient descent (learning rate $\gamma$):
$$\mathbf{W}^{k} = \mathbf{W}^{k-1} - \gamma \frac{\partial \mathcal{L}}{\partial \tilde{\mathbf{Y}}}\cdot \frac{\partial \tilde{\mathbf{Y}}}{\partial \mathbf{W}}, \ \ \Theta^{k} = \Theta^{k-1} - \gamma \frac{\partial \mathcal{L}}{\partial \tilde{\mathbf{Y}}}\cdot \frac{\partial \tilde{\mathbf{Y}}}{\partial \tilde{\mathbf{X}}}\cdot \frac{\partial \tilde{\mathbf{X}}}{\partial \Theta}.$$

\begin{figure}[h]
\centering
\begin{tabular}{cc}
\includegraphics[width=0.105\columnwidth]{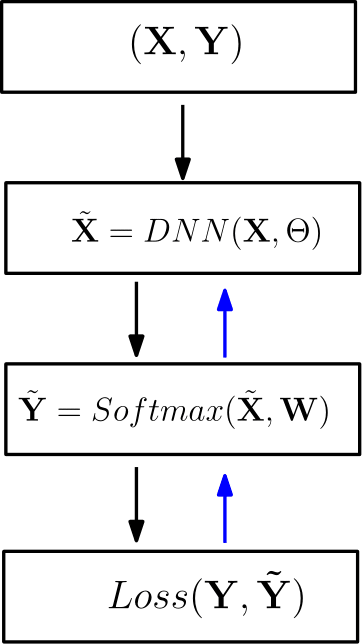}&
\includegraphics[width=0.15\columnwidth]{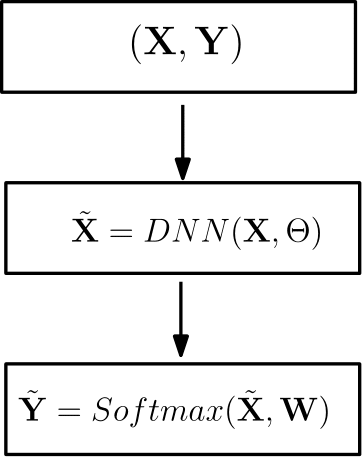}\\
(a)&(b)\\
\end{tabular}
\caption{Training (a) and testing (b) procedures of DNNs with softmax as output activation layer.}
\label{fig:DNN-Structure}
\end{figure}

Once the model is optimized, for testing data $\mathbf{X}$, the predicted labels are:
$$
\tilde{\mathbf{X}} = {\rm Softmax}({\rm DNN}(\mathbf{X}, \Theta), \mathbf{W}),
$$
for notational simplicity, we still denote the test set and optimized weights as $\mathbf{X}$, $\Theta$, and $\mathbf{W}$, respectively.
In essence the softmax layer acts as a linear model on the space of deep features $\tilde{\mathbf{X}}$, which does not take into consideration the underlying manifold structure of $\tilde{\mathbf{X}}$. The WNLL interpolating function, which will be introduced in the following subsection, is an approach to alleviate this deficiency.

\subsection{Manifold Interpolation - An Harmonic Extension Approach}
Let $\mathbf{X} = \{\mathbf{x}_1, \mathbf{x}_2, \cdots, \mathbf{x}_n\}$ be a set of points in a high dimensional manifold $\mathcal{M}\subset\mathbb{R}^d$ and $\mathbf{X}^{\rm te} = \{\mathbf{x}^{\rm te}_1, \mathbf{x}^{\rm te}_2, \cdots, \mathbf{x}^{\rm te}_m\}$ be a subset of $\mathbf{X}$. Suppose we have a (possibly vector valued) label function $g(\mathbf{x})$ defined on $\mathbf{X}^{\rm te}$, and we want to interpolate a function $u$ that is defined on the entire manifold and can be used to label the entire dataset $\mathbf{X}$. Interpolation by using basis function in high dimensional space suffers from the curse of dimensionality. Instead, an harmonic extension is a natural and elegant approach to find such an interpolating function, which is defined by minimizing the following Dirichlet energy functional:
\begin{equation}
\label{DirichletEnergy}
\mathcal{E}(u)=\frac{1}{2}\sum_{\mathbf{x}, \mathbf{y}\in \mathbf{X}} w(\mathbf{x}, \mathbf{y})\left(u(\mathbf{x})-u(\mathbf{y})\right)^2,
\end{equation}
with the boundary condition:
$$
u(\mathbf{x})=g(\mathbf{x}), \ \mathbf{x}\in \mathbf{X}^{\rm te},
$$
where $w(\mathbf{x}, \mathbf{y})$ is a weight function, typically chosen to be Gaussian: $w(\mathbf{x}, \mathbf{y})=\exp(-\frac{||\mathbf{x}-\mathbf{y}||^2}{\sigma^2})$ with $\sigma$ a scaling parameter. The Euler-Lagrange equation for Eq.\eqref{DirichletEnergy} is:
\begin{equation}
\label{EL-Equation}
\begin{cases}
\sum_{\mathbf{y}\in \mathbf{X}} \left(w(\mathbf{x}, \mathbf{y})+w(\mathbf{y}, \mathbf{x})\right)\left(u(\mathbf{x})-u(\mathbf{y})\right)=0 & \mathbf{x}\in \mathbf{X}/\mathbf{X}^{\rm te}\\
u(\mathbf{x})=g(\mathbf{x}) &\mathbf{x}\in \mathbf{X}^{\rm te}.
\end{cases}
\end{equation}
By solving the linear system Eq.(\ref{EL-Equation}), we get the interpolated labels $u(\mathbf{x})$ for unlabeled data $\mathbf{x}\in \mathbf{X}/\mathbf{X}^{\rm te}$. This interpolation becomes invalid when labeled data is tiny, i.e., $|\mathbf{X}^{\rm te}|\ll |\mathbf{X}
/\mathbf{X}^{\rm te}|$. There are two solutions to resolve this issue: one is to replace the $2$-Laplacian in Eq.(\ref{DirichletEnergy}) by a $p$-Laplacian \cite{Calder:2018}; the other is to increase 
the weights of the labeled data in the Euler-Lagrange equation \cite{WNLL:2017}, which gives the following weighted nonlocal Laplacian (WNLL) interpolating function:
\begin{equation}
\label{WNLL}
\begin{cases}
\sum_{\mathbf{y}\in \mathbf{X}} \left(w(\mathbf{x}, \mathbf{y})+w(\mathbf{y}, \mathbf{x})\right)\left(u(\mathbf{x})-u(\mathbf{y})\right)+\\
\left(\frac{|\mathbf{X}|}{|\mathbf{X}^{\rm te}|}-1\right)\sum_{\mathbf{y}\in \mathbf{X}^{\rm te}}w(\mathbf{y}, \mathbf{x})\left(u(\mathbf{x})-u(\mathbf{y})\right)=0 & \mathbf{x}\in \mathbf{X}/\mathbf{X}^{\rm te}\\
u(\mathbf{x})=g(\mathbf{x}) &\mathbf{x}\in \mathbf{X}^{\rm te}.
\end{cases}
\end{equation}
For notational simplicity, we name the solution $u(\mathbf{x})$ to Eq.\ref{WNLL} as ${\rm WNLL}(\mathbf{X}, \mathbf{X}^{\rm te}, \mathbf{Y}^{\rm te})$.
For classification tasks, $g(\mathbf{x})$ is the one-hot labels for the example $\mathbf{x}$. To ensure accuracy of WNLL, the labeled data should cover all classes of data in $\mathbf{X}$. We give a necessary condition in Theorem \ref{Rep-Thm}.
\begin{thm}\label{Rep-Thm}
Suppose we have a data pool formed by $N$ classes of data uniformly, with the number of instances of each class be sufficiently large. If we want all classes of data to be sampled at least once, on average at least $N\left(1+\frac{1}{2}+\frac{1}{3}+\cdots +\frac{1}{N}\right)$ data is need to be sampled from the data pool. In this case, the number of data sampled, in expectation for each class, is $1+\frac{1}{2}+\frac{1}{3}+\cdots +\frac{1}{N}$.
\end{thm}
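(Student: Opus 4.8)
The plan is to recognize this statement as the classical coupon collector problem and to compute the expected waiting time by decomposing it into a sum of geometrically distributed increments. First I would fix the probabilistic model: since every class is represented uniformly and the number of instances per class is assumed sufficiently large, each individual draw selects a class uniformly at random with probability $1/N$, independently of all previous draws (this with-replacement idealization is exactly what the ``sufficiently large'' hypothesis is meant to license). Let $T$ denote the total number of draws required until every one of the $N$ classes has appeared at least once; the goal is then to show that $\mathbb{E}[T] = N H_N$, where $H_N = 1 + \frac{1}{2} + \cdots + \frac{1}{N}$ is the $N$-th harmonic number.

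Next I would introduce the standard decomposition $T = T_1 + T_2 + \cdots + T_N$, where $T_i$ is the number of additional draws needed to observe the $i$-th \emph{new} class, given that exactly $i-1$ distinct classes have already been collected. The key observation is that, while in the state where $i-1$ distinct classes have been seen, each fresh draw is a ``success'' (yields a previously unseen class) with probability $p_i = (N-(i-1))/N = (N-i+1)/N$, and successive draws are independent. Hence $T_i$ is geometrically distributed with parameter $p_i$, so that $\mathbb{E}[T_i] = 1/p_i = N/(N-i+1)$.

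By linearity of expectation and the re-indexing $j = N-i+1$, this gives
$$
\mathbb{E}[T] = \sum_{i=1}^N \mathbb{E}[T_i] = \sum_{i=1}^N \frac{N}{N-i+1} = N\sum_{j=1}^N \frac{1}{j} = N\left(1 + \frac{1}{2} + \frac{1}{3} + \cdots + \frac{1}{N}\right),
$$
which matches the claimed bound. Dividing through by the number of classes $N$ immediately yields the stated average of $1 + \frac{1}{2} + \cdots + \frac{1}{N}$ draws collected per class.

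The main obstacle I anticipate is not the arithmetic but the justification of the independence-and-geometric structure underlying the decomposition: I would need to argue that the ``sufficiently large'' instances-per-class hypothesis keeps the per-draw success probability $p_i$ effectively constant throughout the waiting period for the $i$-th new class, so that the depletion of a class's instances is negligible and $T_i$ is genuinely geometric with the memoryless property applying cleanly. Making this approximation precise, or simply declaring the exact with-replacement model as the working assumption, is the step that warrants the most care; everything downstream is routine once that structure is in place.
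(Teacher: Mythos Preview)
Your proposal is correct and follows essentially the same approach as the paper: the paper's proof likewise decomposes the total number of draws as $X = X_1 + \cdots + X_N$, observes that each $X_i$ is geometric with success probability $(N-i+1)/N$ so that $E[X_i] = N/(N-i+1)$, and sums these via linearity of expectation to obtain $N H_N$. The only minor addition in the paper is the remark that $E[X] \approx N \ln N$ for large $N$.
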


\subsection{WNLL Activated DNNs and Algorithms} \label{sec:Train-Test}
In both training and testing of the WNLL activated DNNs, we need to reserve a small portion of data/label pairs denoted as $(\mathbf{X}^{\rm te}, \mathbf{Y}^{\rm te})$, to interpolate the label $Y$ for new data. We name $(\mathbf{X}^{\rm te}, \mathbf{Y}^{\rm te})$ as the preserved template. Directly replacing softmax by WNLL (Fig. \ref{fig:WNLL-DNN-Structure}(a)) has difficulties in back propagation, namely, the true gradient $\frac{\partial \mathcal{L}}{\partial \Theta}$ is difficult to compute since WNLL defines a very complex implicit function. Instead, to train WNLL activated DNNs, we propose a proxy via an auxiliary neural nets (Fig. \ref{fig:WNLL-DNN-Structure}(b)). On top of the original DNNs, we add a buffer block (a fully connected layer followed by a ReLU), and followed by two parallel layers, WNLL and the linear (fully connected) layers. The auxiliary DNNs can be trained by alternating between the following two steps (training DNNs with linear and WNLL activations, respectively):

{\bf Train DNNs with linear activation: } Run $N_1$ steps of the following forward and back propagation, where in $k$th iteration, we have:

{\it Forward propagation: } The training data $\mathbf{X}$ is transformed, respectively, by DNN, Buffer and Linear blocks to the predicted labels $\tilde{\mathbf{Y}}$:
$$\tilde{\mathbf{Y}} = {\rm Linear}({\rm Buffer}({\rm DNN}(\mathbf{X}, \Theta^{k-1}), \mathbf{W}_B^{k-1}), \mathbf{W}_L^{k-1}).$$
Then compute loss between the ground truth labels $\mathbf{Y}$ and predicted ones $\tilde{\mathbf{Y}}$, denoted as $\mathcal{L}^{\rm Linear}$ (e.g., cross entropy loss, and the same as following $\mathcal{L}^{\rm WNLL}$).

{\it Backpropagation: } Update weights ($\Theta^{k-1}$, $\mathbf{W}_B^{k-1}$, $\mathbf{W}_L^{k-1}$) by gradient descent:
$$
\mathbf{W}_L^k = \mathbf{W}_L^{k-1} - \gamma \frac{\partial \mathcal{L}^{\rm Linear}}{\partial \tilde{\mathbf{Y}}}\cdot \frac{\partial \tilde{\mathbf{Y}}}{\partial \mathbf{W}_L},\ \ \mathbf{W}_B^k = \mathbf{W}_B^{k-1} - \gamma \frac{\partial \mathcal{L}^{\rm Linear}}{\partial \tilde{\mathbf{Y}}}\cdot \frac{\partial \tilde{\mathbf{Y}}}{\partial \hat{\mathbf{X}}}\cdot \frac{\partial \hat{\mathbf{X}}}{\partial \mathbf{W}_B},
$$
$$
\Theta^k = \Theta^{k-1} - \gamma \frac{\partial \mathcal{L}^{\rm Linear}}{\partial \tilde{\mathbf{Y}}}\cdot \frac{\partial \tilde{\mathbf{Y}}}{\partial \hat{\mathbf{X}}}\cdot \frac{\partial \hat{\mathbf{X}}}{\partial \tilde{\mathbf{X}}}\cdot \frac{\partial \tilde{\mathbf{X}}}{\partial \Theta}.
$$

{\bf Train DNNs with WNLL activation: } Run $N_2$ steps of the following forward and back propagation, where in $k$th iteration, we have:

{\it Forward propagation: } The training data $\mathbf{X}$, template $\mathbf{X}^{\rm te}$ and $\mathbf{Y}^{\rm te}$ are transformed, respectively, by DNN, Buffer, and WNLL blocks to get predicted labels $\hat{\mathbf{Y}}$:
$$\hat{\mathbf{Y}} = {\rm WNLL}({\rm Buffer}({\rm DNN}(\mathbf{X}, \Theta^{k-1}), \mathbf{W}_B^{k-1}), \hat{\mathbf{X}}^{\rm te}, \mathbf{Y}^{\rm te}).$$
Then compute loss, $\mathcal{L}^{\rm WNLL}$, between the ground truth labels $\mathbf{Y}$ and predicted ones $\hat{\mathbf{Y}}$.

{\it Backpropagation: } Update weights $\mathbf{W}_B^{k-1}$ only, $\mathbf{W}_L^{k-1}$ and $\Theta^{k-1}$ will be tuned in the next iteration in training DNNs with linear activation, by gradient descent. 

\begin{equation}
\mathbf{W}_B^{k} =  \mathbf{W}_B^{k-1} - \gamma \frac{\partial \mathcal{L}^{\rm WNLL}}{\partial \hat{\mathbf{Y}}}\cdot \frac{\partial \hat{\mathbf{Y}}}{\partial \hat{\mathbf{X}}}\cdot \frac{\partial \hat{\mathbf{X}}}{\partial \mathbf{W}_B} \approx \mathbf{W}_B^{k-1} - \gamma \frac{\partial \mathcal{L}^{\rm Linear}}{\partial \tilde{\mathbf{Y}}}\cdot \frac{\partial \tilde{\mathbf{Y}}}{\partial \hat{\mathbf{X}}}\cdot \frac{\partial \hat{\mathbf{X}}}{\partial \mathbf{W}_B}.
\label{eq:bp-wnll}
\end{equation}

Here we use the computational graph of the left branch (linear layer) to retrieval the approximated gradients for WNLL. For a given loss value of $\mathcal{L}^{\rm WNLL}$, we adopt the approximation
$\frac{\partial \mathcal{L}^{\rm WNLL}}{\partial \hat{\mathbf{Y}}}\cdot \frac{\partial \hat{\mathbf{Y}}}{\partial \hat{\mathbf{X}}} \approx \frac{\partial \mathcal{L}^{\rm Linear}}{\partial \tilde{\mathbf{Y}}}\cdot \frac{\partial \tilde{\mathbf{Y}}}{\partial \hat{\mathbf{X}}}$ where the right hand side is also evaluated at this value.
The main heuristic behind this approximation is the following: WNLL defines a harmonic function implicitly, and a linear function is the simplest nontrivial explicit harmonic function. Empirically, we observe this simple approximation works well in training the network. The reason why we freeze the network in the DNN block is mainly due to stability concerns. 




\begin{figure}[h]
\centering
\begin{tabular}{ccc}
\includegraphics[width=0.18\columnwidth]{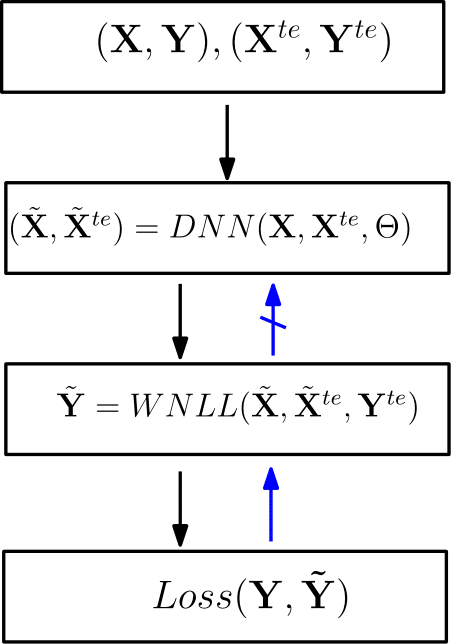}&
\includegraphics[width=0.29\columnwidth]{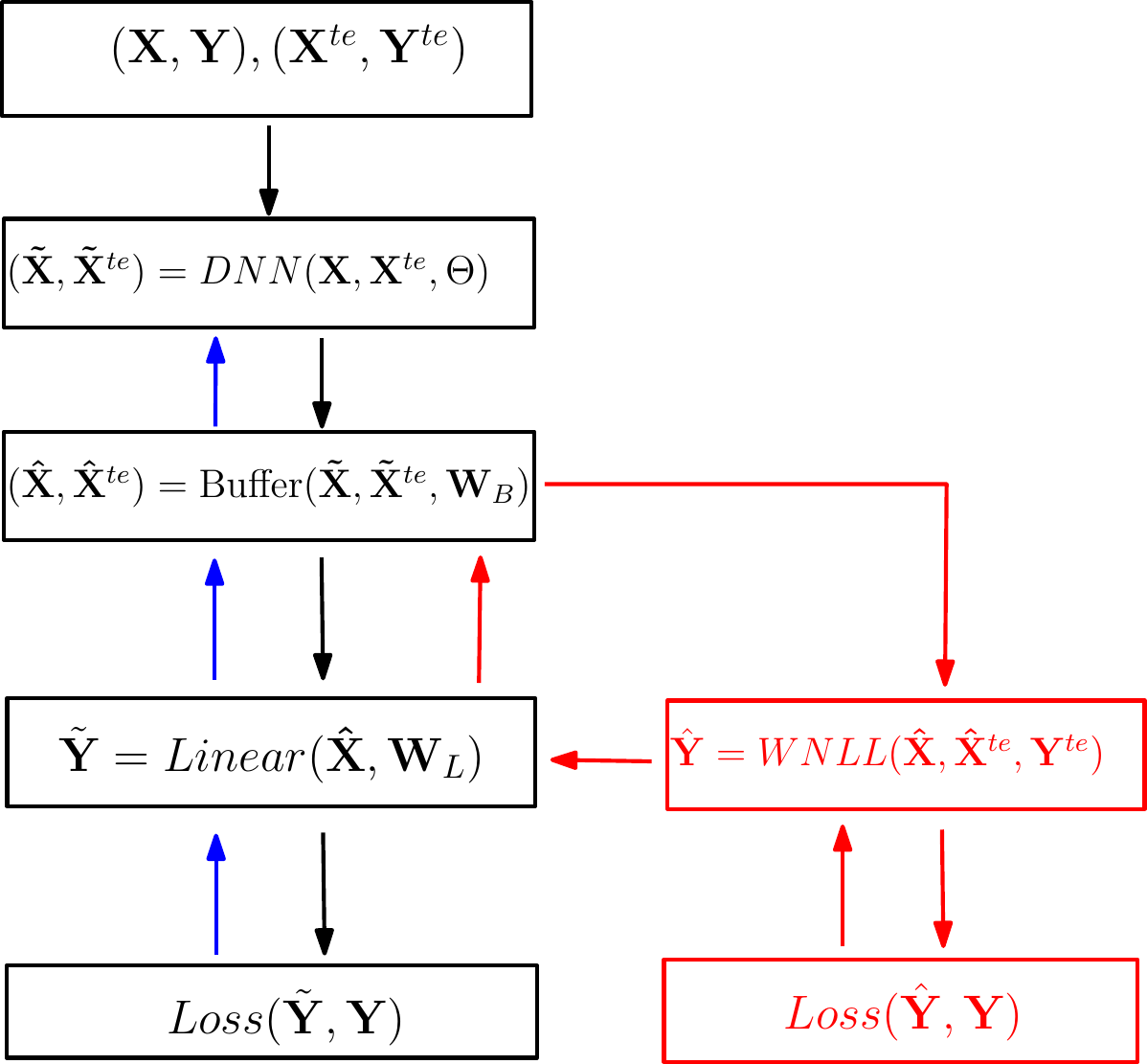}&
\includegraphics[width=0.25\columnwidth]{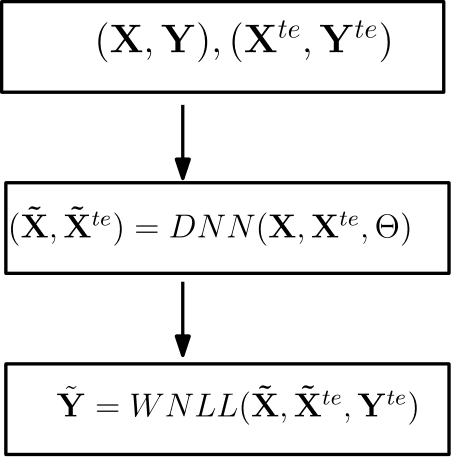}\\
(a)&(b)&(c)\\
\end{tabular}
\caption{Training and testing procedure of the deep neural nets with WNLL as the last activation layer.(a): Direct replacement of the softmax by WNLL, (b): An alternating training procedure. (c): Testing.}
\label{fig:WNLL-DNN-Structure}
\end{figure}

The above alternating scheme is an algorithm of a greedy fashion. During training, WNLL activation plays two roles: on one hand, the alternating between linear and WNLL activations benefits each other which enables the neural nets to learn features that is appropriate for both linear classification and WNLL based manifold interpolation. On the other hand, in the case where we lack sufficient training data, the training of DNNs usually gets stuck at some bad local minima which cannot generalize well on new data. We use WNLL interpolation which provides a perturbation to the trained sub-optimal weights and can help to arrive at a local minima with better generalizability. At test time, we remove the linear classifier from the neural nets and use the DNN block together with WNLL to predict new data (Fig. \ref{fig:WNLL-DNN-Structure} (c)). The reason for using WNLL instead of a linear layer is because WNLL is superior to the linear classifier and this superiority is preserved when applied to deep features (which will be shown in Section. \ref{Experiments}). Moreover, WNLL utilizes both the learned DNNs and the preserved template at test time which seems to be more stable to perturbations on the input data.

We summarize the training and testing procedures for the WNLL activated DNNs in Algorithms \ref{alg-Train} and \ref{alg-Test}, respectively. In each round of the alternating procedure i.e., each outer loop in Algorithm. \ref{alg-Train}, the entire training set $(\mathbf{X}, \mathbf{Y})$ is first used to train the DNNs with linear activation. We randomly separate a template, e.g., half of the entire data, from the training set which will be used to perform WNLL interpolation in training WNLL activated DNNs. In practice, for both training and testing, we use minibatches for both the template and the interpolated points when the entire dataset is too large. The final predicted labels are obtained by a majority voted across interpolation results from all the template minibatches. 

\begin{remark}
In Algorithm. \ref{alg-Train}, the WNLL interpolation is also performed in mini-batch manner (as shown in the inner iteration). Based on our experiments, this does not reduce the interpolation accuracy significantly.
\end{remark}



\begin{algorithm}
\caption{DNNs with WNLL as Output Activation: Training Procedure.}\label{alg-Train}
\begin{algorithmic}
\State \textbf{Input: } Training set: (data, label) pairs $(\mathbf{X}, \mathbf{Y})$.
\State \textbf{Output: } An optimized DNNs with WNLL as output activation, denoted as ${\rm DNN}_{\rm WNLL}$.
\For {${\rm iter} = 1, $\dots$, N$ (where $N$ is the number of alternating steps.)}
\State //Train the left branch: DNNs with linear activation.
\State Train DNN $+$ Linear blocks, and denote the learned model as ${\rm DNN}_{\rm Linear}$.
\State //Train the right branch: DNNs with WNLL activation.
\State Split $(\mathbf{X}, \mathbf{Y})$ into training data and template, i.e., $(\mathbf{X}, \mathbf{Y}) \doteq (\mathbf{X}^{\rm tr}, \mathbf{Y}^{\rm tr}) \bigcup (\mathbf{X}^{\rm te}, \mathbf{Y}^{\rm te})$.
\State Partition the training data into $M$ mini-batches, i.e., $(\mathbf{X}^{\rm tr}, \mathbf{Y}^{\rm tr}) = \bigcup_{i=1}^M (\mathbf{X}_i^{\rm tr}, \mathbf{Y}_i^{\rm tr})$.
\For {$i = 1, 2, \cdots, M$}
\State Transform $\mathbf{X}_i^{\rm tr}\bigcup \mathbf{X}^{\rm te}$ by ${\rm DNN}_{\rm Linear}$, i.e.,  $\tilde{\mathbf{X}}^{\rm tr} \bigcup \tilde{\mathbf{X}}^{\rm te} = {\rm DNN}_{\rm Linear}(\mathbf{X}_i^{\rm tr}\bigcup \mathbf{X}^{\rm te})$.
\State Apply WNLL (Eq.(\ref{WNLL})) on $\{\tilde{\mathbf{X}}^{\rm tr} \bigcup \tilde{\mathbf{X}}^{\rm te}, \mathbf{Y}^{\rm te}\}$ to interpolate label $\tilde{\mathbf{Y}}^{\rm tr}$.
\State Backpropagate the error between $\mathbf{Y}^{\rm tr}$ and $\tilde{\mathbf{Y}}^{\rm tr}$ via Eq.(\ref{eq:bp-wnll}) to update $\mathbf{W}_B$ only.
\EndFor
\EndFor
\end{algorithmic}
\end{algorithm}


\begin{algorithm}
\caption{DNNs with WNLL as Output Activation: Testing Procedure.}\label{alg-Test}
\begin{algorithmic}
\State \textbf{Input: } Testing data $\mathbf{X}$, template $(\mathbf{X}^{\rm te}, \mathbf{Y}^{\rm te})$. Optimized model ${\rm DNN}_{\rm WNLL}$.
\State \textbf{Output: } Predicted label $\tilde{\mathbf{Y}}$ for $\mathbf{X}$.
\State Apply the DNN block of ${\rm DNN}_{\rm WNLL}$ to $\mathbf{X}\bigcup \mathbf{X}^{\rm te}$ to get the representation $\tilde{\mathbf{X}}\bigcup \tilde{\mathbf{X}}^{\rm te}$.
\State Apply WNLL (Eq.(\ref{WNLL})) on $\{\tilde{\mathbf{X}} \bigcup \tilde{\mathbf{X}}^{\rm te}, \mathbf{Y}^{\rm te}\}$ to interpolate label $\tilde{\mathbf{Y}}$.
\end{algorithmic}
\end{algorithm}

\section{Theoretical Explanation}
\label{sec:Theory}
In training WNLL activated DNNs, the two output activation functions in the auxiliary networks are, in a sense, each competing to minimize its own objective where, in equilibrium, the neural nets can learn better features for both linear and interpolation-based activations. This in flavor is similar to generative adversarial nets (GAN) \cite{GAN}.
Another interpretation of our model is the following: As noted in  \cite{ResNet:PDE}, in the continuum limit, ResNet can be modeled as the following control problem for a transport equation:
\begin{equation}
\label{Linear-Transport}
\begin{cases}
\frac{\partial u(\mathbf{x}, t)}{\partial t}+\mathbf{v}(\mathbf{x}, t)\cdot \nabla u(\mathbf{x}, t)=0 & \mathbf{x}\in \mathbf{X}, t\geq 0\\
u(\mathbf{x}, 1)=f\left(\mathbf{x}\right) &\mathbf{x}\in \mathbf{X}.
\end{cases}
\end{equation}
Here $u(\cdot, 0)$ is the input of the continuum version of ResNet, which maps the training data to the corresponding label. $f(\cdot)$ is the terminal value which analogous to the output activation function in ResNet which maps deep features to the predicted label. Training ResNet is equivalent to tuning $\mathbf{v}(\cdot, t)$, i.e., continuous version of the weights, s.t. the predicted label $f(\cdot)$ matches that of the training data. If $f(\cdot)$ is a harmonic extension of $u(\cdot, 0)$, the corresponding weights $v(\mathbf{x}, t)$ would be close to zero. This results in a simpler model and may generalize better from a model selection point of view.

\section{Numerical Results} \label{Experiments}
To validate the classification accuracy, efficiency and robustness of the proposed framework, we test the new architecture and algorithm on CIFAR10, CIFAR100 \cite{Cifar:2009}, MNIST\cite{MNIST:1998} and SVHN datasets \cite{SVHN:2011}.
In all experiments, we apply standard data augmentation that is widely used for the CIFAR datasets \cite{DRN:2016,Huang:2017CVPR,Zagoruyko:2016}. For MNIST and SVHN, we use the raw data without any augmentation. We implement our algorithm on the PyTorch platform \cite{paszke2017automatic}. All computations are carried out on a machine with a single Nvidia Titan Xp graphics card.


Before diving into the performance of DNNs with different output activation functions, we first compare the performance of WNLL with softmax on the raw input images for various datasets. The training sets are used to train the softmax models and interpolate labels for testing set in softmax and WNLL, respectively. Table \ref{Simple-Classifiers} lists the classification accuracies of WNLL and softmax on three datasets.
For WNLL interpolation, in order to speed up the computation, we only use 15 nearest neighbors to ensure sparsity of the weight matrix, and the 8th neighbor's distance is used to normalize the weight matrix. The nearest neighbors are searched via the approximate nearest neighbor (ANN) algorithm \cite{ANN:2014}. WNLL outperforms softmax significantly in all three tasks. These results show the potential of using WNLL instead of softmax as the output activation function in DNNs.



\begin{table}[!h]
\renewcommand{\arraystretch}{1.3}
\centering
\caption{Accuracies of softmax and WNLL in classifying some classical datasets.}
\label{Simple-Classifiers}
\centering
\begin{tabular}{cccc}
\hline
\footnotesize{Dataset} & \footnotesize{CIFAR10} & \footnotesize{MNIST} & \footnotesize{SVHN}\\
\hline
softmax & 39.91\%  & 92.65\%   & 24.66\% \\
WNLL    & 40.73\%  & 97.74\%   & 56.17\% \\
\hline
\end{tabular}
\end{table}

For the deep learning experiments below: We take two passes alternating steps, i.e., $N=2$ in Algorithm. \ref{alg-Train}. For the linear activation stage (Stage 1), we train the network for $n=400$ epochs. For the WNLL stage, we train for $n=5$ epochs. In the first pass, the initial learning rate is 0.05 and halved after every 50 epochs in training linear activated DNNs, and 0.0005 when training the WNLL activation. The same Nesterov momentum and weight decay as used in \cite{ResNet,Huang:2016ECCV} are used for CIFAR and SVHN experiments, respectively. In the second pass, the learning rate is set to be one fifth of the corresponding epochs in the first pass. The batch sizes are 128 and 2000 when training softmax/linear and WNLL activated DNNs, respectively. For fair comparison, we train the vanilla DNNs with softmax output activation for 810 epochs with the same optimizers used in WNLL activated ones. All final test errors reported for the WNLL method are done using WNLL activations for prediction on  the test set. In the rest of this section, we show that the proposed framework resolves the issue of lacking big training data and boosts the generalization accuracies of DNNs via numerical results on CIFAR10/CIFAR100. The numerical results on SVHN are provided in the appendix.

\subsection{Resolving the Challenge of Insufficient Training Data}
When we do not have sufficient training data, the generalization accuracy typically degrades as the network goes deeper, as illustrated in Fig.\ref{Degenerate}. The WNLL activated DNNs, with its superior regularization of the parameters and perturbation on bad local minima, are able to overcome this degradation.  The left and right panels plot the cases when the first 1000 and 10000 data in the training set of CIFAR10 are used to train the vanilla and WNLL DNNs. 
As shown in Fig. \ref{Degenerate}, by using WNLL activation, the generalization error rates decay consistently as the network goes deeper, in contrast to the degradation for vanilla DNNs. The generalization accuracy between the vanilla and WNLL DNNs can differ up to 10 percent within our testing regime.

\begin{figure}[h]
\centering
\begin{tabular}{cc}
\includegraphics[width=0.25\columnwidth]{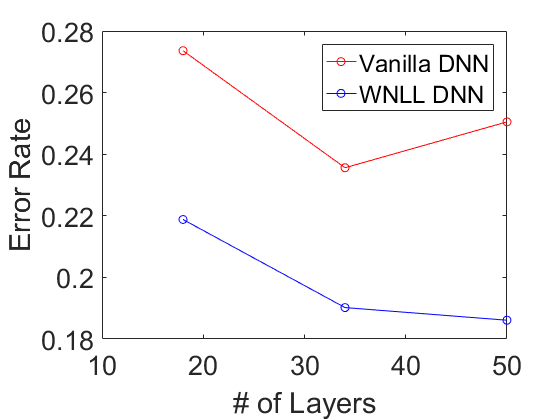}&
\includegraphics[width=0.25\columnwidth]{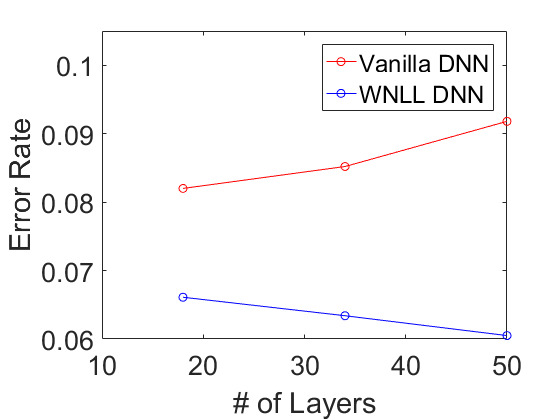}\\
(a)&(b)\\
\end{tabular}
\caption{Resolving the degradation problem of vanilla DNNs by WNLL activation. Panels (a) and (b) plot the generation errors when 1000 and 10000 training data are used to train the vanilla and the WNLL activated DNNs, respectively. In each plot, we test three different networks: PreActResNet18, PreActResNet34, and PreActResNet50. All tests are done on the CIFAR10 dataset.}
\label{Degenerate}
\end{figure}


Figure.\ref{Generation-Acc-Evolution} plots the evolution of generalization accuracy during training. We compute the test accuracy per epoch.
Panels (a) and (b) plot the test accuracies for ResNet50 with softmax and WNLL activations (1-400 and 406-805 epochs corresponds to linear activation), respectively, with only the first 1000 examples as training data from CIFAR10. Charts (c) and (d) are the corresponding plots with 10000 training instances, using a pre-activated ResNet50. After around 300 epochs, the accuracies of the vanilla DNNs plateau and cannot improve any more. In comparison, the test accuracy for WNLL jumps at the beginning of Stage 2 in first pass; during Stage 1 of the second pass, even though initially there is an accuracy reduction, the accuracy continues to climb and eventually surpasses that of the WNLL activation in Stage 2 of first pass. 
The jumps in accuracy at epoch 400 and 800 are due to switching from linear activation to WNLL for predictions on the test set. The initial decay when alternating back to softmax is caused partially by the final layer $W_L$ not being tuned with respect to the deep features $\tilde{\mathbf{X}}$, and partially due to predictions on the test set being made by softmax instead of WNLL.  Nevertheless, the perturbation via the WNLL activation quickly results in the accuracy increasing beyond the linear stage in the previous pass.

\begin{figure}[h]
\centering
\begin{tabular}{cccc}
\includegraphics[width=0.23\columnwidth]{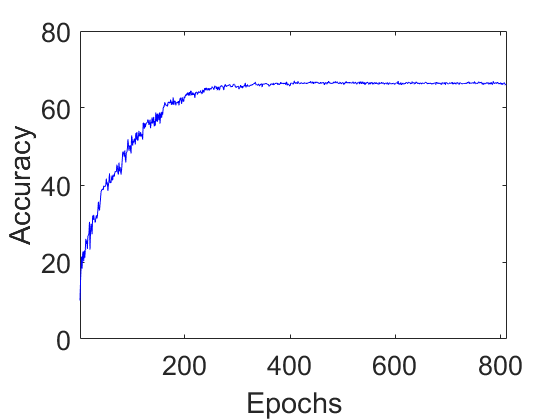}&
\includegraphics[width=0.23\columnwidth]{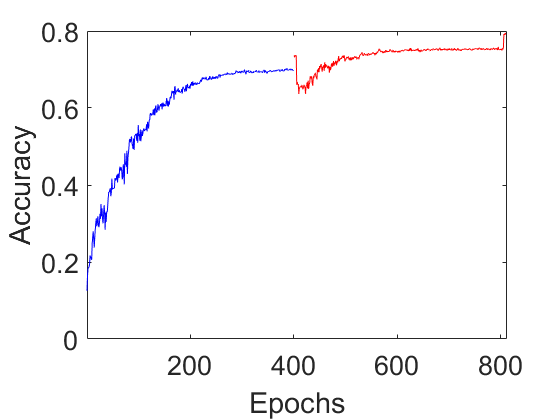}&
\includegraphics[width=0.23\columnwidth]{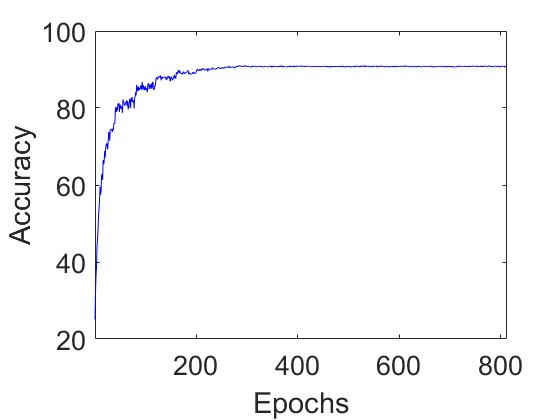}&
\includegraphics[width=0.23\columnwidth]{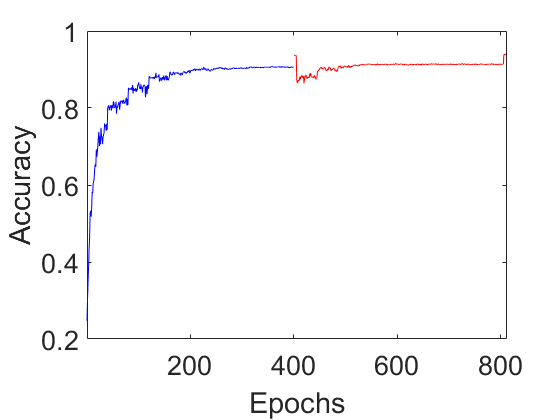}\\
(a)&(b)&(c)&(d)\\
\end{tabular}
\caption{The evolution of the generation accuracies over the training procedure. Charts (a) and (b) are the accuracy plots for ResNet50 with 1000 training data, where (a) and (b) are plots for the epoch v.s. accuracy of the vanilla and the WNLL activated DNNs. Panels (c) and (d) correspond to the case of 10000 training data for PreActResNet50. All tests are done on the CIFAR10 dataset.}
\label{Generation-Acc-Evolution}
\end{figure}


\subsection{Improving Generalization Accuracy}


We next show the superiority of WNLL activated DNNs in terms of generalization accuracies when compared to their surrogates with softmax or SVM output activations. Besides ResNets, we also test the WNLL surrogate on the VGG networks. In table \ref{Cifar10}, we list the generalization errors for 15 different DNNs from VGG, ResNet, Pre-activated ResNet families on the entire, first 10000 and first 1000 instances of the CIFAR10 training set. We observe that WNLL in general improves more for ResNets and pre-activated ResNets, with less but still significant improvements for the VGGs. Except for VGGs, we can achieve relatively 20$\%$ to $30\%$ testing error rate reduction across all neural nets. All results presented here and in the rest of this paper are the median of 5 independent trials. We also compare with SVM as an alternative output activation, and observe that the results are still inferior to WNLL. Note that the bigger batch-size is to ensure the interpolation quality of WNLL. A reasonable concern is that the performance increase comes from the variance reduction due to increasing the batch size. However, experiments done with a batch size of 2000 for vanilla networks actually deteriorates the test accuracy. 



\begin{table*}[!htbp]
\centering
\caption{Generalization error rates over the test set of vanilla DNNs, SVM and WNLL activated ones trained over the entire, the first 10000, and the first 1000 instances of training set of CIFAR10. (Median of 5 independent trials)}
\label{Cifar10}
\begin{tabular}{cccccccc}
\toprule
\footnotesize{Network} &  \multicolumn{3}{c}{\footnotesize{Whole}} &  \multicolumn{2}{c}{\footnotesize{10000}} & \multicolumn{2}{c}{\footnotesize{1000}}\\
\midrule
{}  & \textbf{\footnotesize{Vanilla}} &\textbf{\footnotesize{WNLL}} &\textbf{\footnotesize{SVM}} &\textbf{\footnotesize{Vanilla}}   &\textbf{\footnotesize{WNLL}}   &\textbf{\footnotesize{Vanilla}}   &\textbf{\footnotesize{WNLL}}  \\
\footnotesize{VGG11} 			&\footnotesize{9.23\%} & \footnotesize{{\bf 7.35\%}}  &\footnotesize{9.28\%}  &\footnotesize{10.37\%}&\footnotesize{{\bf 8.88\%}}   &\footnotesize{26.75\%} &\footnotesize{{\bf 24.10\%}} \\
\footnotesize{VGG13} 			&\footnotesize{6.66\%} & \footnotesize{{\bf 5.58\%}}  &\footnotesize{7.47\%}  &\footnotesize{9.12\%} &\footnotesize{{\bf 7.64\%}}   &\footnotesize{24.85\%} &\footnotesize{{\bf 22.56\%}} \\
\footnotesize{VGG16} 			&\footnotesize{6.72\%} & \footnotesize{{\bf 5.69\%}}  &\footnotesize{7.29\%}  &\footnotesize{9.01\%} &\footnotesize{{\bf 7.54\%}}   &\footnotesize{25.41\%} &\footnotesize{{\bf 22.23\%}} \\
\footnotesize{VGG19} 			&\footnotesize{6.95\%} & \footnotesize{{\bf 5.92\%}}  &\footnotesize{7.99\%}  &\footnotesize{9.62\%} &\footnotesize{{\bf 8.09\%}}   &\footnotesize{25.70\%} &\footnotesize{{\bf 22.87\%}} \\
\footnotesize{ResNet20} 		&\footnotesize{9.06\%} \footnotesize{(8.75\%\cite{ResNet})} & \footnotesize{{\bf 7.09\%}}  &\footnotesize{9.60\%}  &\footnotesize{12.83\%}&\footnotesize{{\bf 9.96\%}}   &\footnotesize{34.90\%} &\footnotesize{{\bf 29.91\%}} \\
\footnotesize{ResNet32} 		&\footnotesize{7.99\%} \footnotesize{(7.51\%\cite{ResNet})} & \footnotesize{{\bf 5.95\%}}  &\footnotesize{8.73\%}  &\footnotesize{11.18\%}&\footnotesize{{\bf 8.15\%}}   &\footnotesize{33.41\%} &\footnotesize{{\bf 28.78\%}} \\
\footnotesize{ResNet44} 		&\footnotesize{7.31\%} \footnotesize{(7.17\%\cite{ResNet})} & \footnotesize{{\bf 5.70\%}}  &\footnotesize{8.67\%}  &\footnotesize{10.66\%}&\footnotesize{{\bf 7.96\%}}   &\footnotesize{34.58\%} &\footnotesize{{\bf 27.94\%}} \\
\footnotesize{ResNet56} 		&\footnotesize{7.24\%} \footnotesize{(6.97\%\cite{ResNet})} & \footnotesize{{\bf 5.61\%}}  &\footnotesize{8.58\%}  &\footnotesize{ 9.83\%}&\footnotesize{{\bf 7.61\%}}   &\footnotesize{37.83\%} &\footnotesize{{\bf 28.18\%}} \\
\footnotesize{ResNet110} 		&\footnotesize{6.41\%} \footnotesize{(6.43\%\cite{ResNet})} & \footnotesize{{\bf 4.98\%}}  &\footnotesize{8.06\%}  &\footnotesize{ 8.91\%}&\footnotesize{{\bf 7.13\%}}   &\footnotesize{42.94\%} &\footnotesize{{\bf 28.29\%}} \\
\footnotesize{ResNet18} 		&\footnotesize{6.16\%} & \footnotesize{{\bf 4.65\%}}  &\footnotesize{6.00\%}  &\footnotesize{8.26\%} &\footnotesize{{\bf 6.29\%}}   &\footnotesize{27.02\%} &\footnotesize{{\bf 22.48\%}} \\
\footnotesize{ResNet34} 		&\footnotesize{5.93\%} & \footnotesize{{\bf 4.26\%}}  &\footnotesize{6.32\%}  &\footnotesize{8.31\%} &\footnotesize{{\bf 6.11\%}}   &\footnotesize{26.47\%} &\footnotesize{{\bf 20.27\%}} \\
\footnotesize{ResNet50} 		&\footnotesize{6.24\%} & \footnotesize{{\bf 4.17\%}}  &\footnotesize{6.63\%}  &\footnotesize{9.64\%} &\footnotesize{{\bf 6.49\%}}   &\footnotesize{29.69\%} &\footnotesize{{\bf 20.19\%}} \\
\footnotesize{PreActResNet18}   &\footnotesize{6.21\%} & \footnotesize{{\bf 4.74\%}}  &\footnotesize{6.38\%}  &\footnotesize{8.20\%} &\footnotesize{{\bf 6.61\%}}   &\footnotesize{27.36\%} &\footnotesize{{\bf 21.88\%}} \\
\footnotesize{PreActResNet34} 	&\footnotesize{6.08\%} & \footnotesize{{\bf 4.40\%}}  &\footnotesize{5.88\%}  &\footnotesize{8.52\%} &\footnotesize{{\bf 6.34\%}}   &\footnotesize{23.56\%} &\footnotesize{{\bf 19.02\%}} \\
\footnotesize{PreActResNet50} 	&\footnotesize{6.05\%} & \footnotesize{{\bf 4.27\%}}  &\footnotesize{5.91\%}  &\footnotesize{9.18\%} &\footnotesize{{\bf 6.05\%}}   &\footnotesize{25.05\%} &\footnotesize{{\bf 18.61\%}} \\
\bottomrule
\end{tabular}
\end{table*}

Tables \ref{Cifar10} and \ref{Cifar100} list the error rates of 15 different vanilla networks and WNLL activated networks on CIFAR10 and CIFAR100 datasets. On CIFAR10, WNLL activated DNNs outperforms the vanilla ones with around 1.5$\%$ to 2.0$\%$ absolute, or 20$\%$ to 30$\%$ relative error rate reduction. The improvements on CIFAR100 are more significant. We independently ran the vanilla DNNs on both datasets, and our results are consistent with the original reports and other researchers' reproductions \cite{DRN:2016,IdentityMap:2016,Huang:2017CVPR}. We provide experimental results of DNNs' performance on SVHN data in the appendix. Interestingly, the improvement are more significant on harder tasks, suggesting potential for our methods to succeed on other tasks/datasets. For example, reducing the sizes of DNNs is an important direction to make the DNNs applicable for generalize purposes, e.g., auto-drive, mobile intelligence, etc. So far the most successful attempt is DNNs weights quantization\cite{BinaryConnect:2015}. Our approach is a new direction for reducing the size of the model: to achieve the same level of accuracy, compared to the vanilla networks, our model's size can be much smaller.

\begin{table}[!h]
\renewcommand{\arraystretch}{1.3}
\centering
\caption{Error rates of the vanilla DNNs v.s. the WNLL activated DNNs over the whole CIFAR100 dataset. (Median of 5 independent trials)}
\label{Cifar100}
\centering
\resizebox{\textwidth}{!}{\begin{tabular}{cccccc}
\hline
\textbf{Network} & \textbf{Vanilla DNNs} & \textbf{WNLL DNNs} & \textbf{Network} & \textbf{Vanilla DNNs} & \textbf{WNLL DNNs}\\
\hline
VGG11           & 32.68\% & {\bf 28.80\%}  & ResNet110       & 28.86\% & {\bf 23.74\%}\\
VGG13           & 29.03\% & {\bf 25.21\%}  & ResNet18 	     & 27.57\% & {\bf 22.89\%}\\
VGG16           & 28.59\% & {\bf 25.72\%}  & ResNet34 	     & 25.55\% & {\bf 20.78\%}\\
VGG19           & 28.55\% & {\bf 25.07\%}  & ResNet50        & 25.09\% & {\bf 20.45\%}\\
ResNet20 		& 35.79\% & {\bf 31.53\%}  & PreActResNet18  & 28.62\% & {\bf 23.45\%}\\
ResNet32 		& 32.01\% & {\bf 28.04\%}  & PreActResNet34  & 26.84\% & {\bf 21.97\%}\\
ResNet44        & 31.07\% & {\bf 26.32\%}  & PreActResNet50  & 25.95\% & {\bf 21.51\%}\\
ResNet56        & 30.03\% & {\bf 25.36\%}  &                 &         &        \\
\hline
\end{tabular} }
\end{table}

\section{Concluding Remarks}
We are motivated by ideas from manifold interpolation and the connection between ResNets and control problems of transport equations. We propose to replace the classical output activation function, i.e., softmax, by a harmonic extension type of interpolating function. This simple surgery enables the deep neural nets (DNNs) to make sufficient use of the manifold information of data. An end-to-end greedy style, multi-stage training algorithm is proposed to train this novel output layer. On one hand, our new framework resolves the degradation problem caused by insufficient data; on the other hand, it boosts the generalization accuracy significantly compared to the baseline. This improvement is consistent across networks of different types and different number of layers. The increase in generalization accuracy could also be used to train smaller models with the same accuracy, which has great potential for the mobile device applications.

\subsection{Limitation and Future Work}
There are several limitations of our framework to improve which we wish to remove. Currently, the manifold interpolation step is still a computational bottleneck in both speed and memory. During the interpolation, in order to make the interpolation valid, the batch size needs to be quasilinear with respect to the number of classes. This pose memory challenges for the ImageNet dataset \cite{imagenet_cvpr09}. Another important issue is the approximation of the gradient of the WNLL activation function. Linear function is one option but it is far from optimal. We believe a better harmonic function approximation can further lift the model's performance.

Due to the robustness and generalization capabilities shown by our experiments, we conjecture that by using the interpolation function as output activation, neural nets can become more stable to perturbations and adversarial attacks \cite{Papernot:2015}. The reason for this stability conjecture is because our framework combines both learned decision boundary and nearest neighbor information for classification. 

\clearpage
\section*{Acknowledgments}
This material is based on research sponsored by the Air Force Research Laboratory and DARPA under agreement number FA8750-18-2-0066. And by the U.S. Department of Energy,
Office of Science and by National Science Foundation, under Grant Numbers DOE-SC0013838
and DMS-1554564, (STROBE). And by the NSF DMS-1737770, NSFC 11371220 and 11671005, and the Simons foundation. The U.S. Government is authorized to reproduce and distribute reprints for Governmental purposes notwithstanding any copyright notation thereon.


\newpage
\section*{Appendix}
\begin{proof}[Proof of Theorem \ref{Rep-Thm}]
Let $X_i, i=1, 2, \cdots, N$, be the number of additional data needed to obtain the $i$-type after $(i-1)$ distinct types have been sampled. The total number of instances needed is:
$$
X = X_1+X_2+\cdots+X_N = \sum_{i=1}^N X_i.
$$
For any $i$, $i-1$ distinct types of instances have already been sampled. It follows that  the probability of a new instance being of a different type is $1-\frac{i-1}{N}=\frac{N-i+1}{N}$. Essentially, to obtain the $i$-th distinct type, the random variable $X$ follows a geometric distribution with $p=\frac{N-i+1}{N}$ and $E[X_i]=\frac{N}{N-i+1}$. Thus, we have
$$
E[X] = \sum_{i=1}^NE[X_i]=\sum_{i=1}^N\frac{N}{N-i+1}.
$$
Asymptotically, $E[X]\approx N\ln N$ for sufficiently large $N$.
\end{proof}

\section*{Results on SVHN data}
For the SVHN recognition task, we simply test the performance when the full training data are used. Here we only test the performance of the 
ResNets and pre-activated ResNets. There is a relative 7$\%$-10$\%$ error rate reduction for all these DNNs.

\begin{table}[!h]
\renewcommand{\arraystretch}{1.3}
\centering
\caption{Error rates of the vanilla DNNs v.s. the WNLL activated DNNs over the whole SVHN dataset. (Median of 5 independent trials)}
\label{SVHN-Whole}
\centering
\begin{tabular}{ccc}
\hline
\textbf{Network} & \textbf{Vanilla DNNs} & \textbf{WNLL DNNs}\\
\hline
ResNet20        &  3.76\% & {\bf 3.44\%}\\
ResNet32        &  3.28\% & {\bf 2.96\%}\\
ResNet44        &  2.84\% & {\bf 2.56\%}\\
ResNet56        &  2.64\% & {\bf 2.32\%}\\
ResNet110       &  2.55\% & {\bf 2.26\%}\\
ResNet18 		&  3.96\% & {\bf 3.65\%}\\
ResNet34 		&  3.81\% & {\bf 3.54\%}\\
PreActResNet18 	&  4.03\% & {\bf 3.70\%}\\
PreActResNet34 	&  3.66\% & {\bf 3.32\%}\\
\hline
\end{tabular}
\end{table}
\end{document}